\documentclass[letter,11pt]{article}

\usepackage[utf8]{inputenc}
\usepackage[T1]{fontenc}

\usepackage[a4paper,margin=1in]{geometry}

\usepackage[]{natbib}
\bibliographystyle{plainnat}

\usepackage{url}            
\usepackage{amsfonts}       
\usepackage{nicefrac}       
\usepackage{microtype}      

\usepackage[page,header]{appendix}


\usepackage[usenames,dvipsnames]{xcolor}

\definecolor{DarkRed}{rgb}{0.368,0.097,0.078}
\definecolor{DarkBlue}{rgb}{0.2,0.2,0.6}

\usepackage[colorlinks = true,
    linkcolor = DarkBlue,
    anchorcolor = DarkBlue,
    citecolor = DarkBlue,
    filecolor = DarkBlue,
    urlcolor = DarkBlue,
    pagebackref]{hyperref}

\usepackage{booktabs,threeparttable,array,caption,makecell,graphicx}

\newcommand{\CellNote}[2]{#1\textsuperscript{#2}}


\newcommand{\expb}{2^{\Theta(n)}}

\newcommand{\dexpb}{2^{2^{\Theta(n)}}}

\newcommand{\RE}{\mathsf{RE}}
\newcommand{\REcap}{\mathsf{RE}(\cap)}
\newcommand{\REneg}{\mathsf{RE}(\neg)}

\newcommand{\dfa}{\mathsf{DFA}}   

\newcommand{\nfa}{\mathsf{NFA}} 



\newcommand{\err}{\texttt{Err}\xspace}

\newcommand{\re}{\text{RE}\xspace}
\newcommand{\ksat}{$k\text{-SAT}$\xspace}
\newcommand{\poly}{\text{poly}\xspace}

\newcommand{\boldz}{\boldsymbol{z}}
\newcommand{\boldx}{\boldsymbol{x}}
\newcommand{\boldy}{\boldsymbol{y}}
\newcommand{\boldu}{\boldsymbol{u}}

\newcommand{\dashrule}[4][\linewidth]{%
  \noindent\leavevmode
  \hbox to #1{\xleaders\hbox{\rule{#2}{#3}\hskip #4}\hfil}%
}


\usepackage{amsthm} 
\usepackage{mathtools,thmtools}
\usepackage{amsfonts,amsmath,amssymb}
\usepackage[capitalise]{cleveref}
\usepackage{times}
\usepackage{algorithm}
\usepackage{algpseudocode}
\usepackage{thm-restate}
\usepackage{tcolorbox}
\usepackage{lipsum}
\usepackage{enumitem}
\usepackage{bm}
\usepackage{xspace}
\usepackage{nicefrac}
\usepackage{dsfont}
\usepackage{float}
\usepackage{bbm}
\usepackage{mdframed}
\usepackage{booktabs} 

\usepackage[color=green!25,prependcaption,textsize=tiny]{todonotes}

\usepackage{multirow}

\usepackage{array}

\usepackage{enumitem}
\usepackage{bm}
\usepackage{xspace}
\usepackage{nicefrac}


\usepackage{dsfont}

\declaretheoremstyle[
	    spaceabove=\topsep, 
	    spacebelow=\topsep, 
	    headfont=\normalfont\bfseries,
	    bodyfont=\normalfont\itshape,
	    notefont=\normalfont\bfseries,
	    notebraces={(}{)},
	    postheadspace=0.5em, 
	    headpunct={},
	    postfoothook=\noindent\ignorespaces
    ]{theorem}
\declaretheorem[style=theorem,numberwithin=section]{theorem}

\declaretheoremstyle[
	    spaceabove=\topsep, 
	    spacebelow=\topsep, 
	    headfont=\normalfont\bfseries,
	    bodyfont=\normalfont,
	    notefont=\normalfont\bfseries,
	    notebraces={(}{)},
	    postheadspace=0.5em, 
	    headpunct={},
	    postfoothook=\noindent\ignorespaces
    ]{definition}

\declaretheoremstyle[
        spaceabove=\topsep, 
        spacebelow=\topsep, 
        headfont=\normalfont\bfseries,
        bodyfont=\normalfont,
        notefont=\normalfont\bfseries,
        notebraces={}{},
        postheadspace=0.5em, 
        qed=$\blacksquare$, 
        headpunct={},
        postfoothook=\noindent\ignorespaces
    ]{proofstyle}
\declaretheorem[style=proofstyle,numbered=no,name=Proof]{proof}


\declaretheoremstyle[
        spaceabove=\topsep, 
        spacebelow=\topsep, 
        headfont=\normalfont\bfseries,
        bodyfont=\normalfont,
        notefont=\normalfont\bfseries,
        notebraces={}{},
        postheadspace=0.5em, 
        qed=$\blacksquare$, 
        headpunct={},
        postfoothook=\noindent\ignorespaces
    ]{proofstyle}

		
\declaretheorem[style=theorem,sibling=theorem,name=Lemma]{lemma}

\declaretheorem[style=theorem,sibling=theorem,name=Assumption]{assumption}

\declaretheorem[style=theorem,numbered=no,name=Theorem]{theorem*}
\declaretheorem[style=theorem,numbered=no,name=Lemma]{lemma*}
\declaretheorem[style=theorem,numbered=no,name=Corollary]{corollary*}
\declaretheorem[style=theorem,numbered=no,name=Proposition]{proposition*}
\declaretheorem[style=theorem,numbered=no,name=Claim]{claim*}
\declaretheorem[style=theorem,numbered=no,name=Fact]{fact*}
\declaretheorem[style=theorem,numbered=no,name=Observation]{observation*}
\declaretheorem[style=theorem,numbered=no,name=Conjecture]{conjecture*}

\declaretheorem[style=definition,numbered=no,name=Definition]{definition*}
\declaretheorem[style=definition,numbered=no,name=Remark]{remark*}
\declaretheorem[style=definition,numbered=no,name=Example]{example*}
\declaretheorem[style=definition,numbered=no,name=Question]{question*}




\DeclareMathAlphabet{\mathbfsf}{\encodingdefault}{\sfdefault}{bx}{n}


\let\Pr\relax
\DeclareMathOperator{\Pr}{\mathbb{P}}


\newcommand{\lr}[1]{\mathopen{}\left(#1\right)}




\usepackage{mathtools}
\usepackage{float}
\usepackage{enumitem}
\usepackage{algorithm}
\usepackage{algpseudocode}


\newcommand{\cD}{\mathcal{D}}

\newcommand{\cX}{\mathcal{X}}

\newcommand{\cC}{\mathcal{C}}







\newcommand{\beq}{\begin{eqnarray*}}
\newcommand{\eeq}{\end{eqnarray*}}
\newcommand{\beqn}{\begin{eqnarray}}
\newcommand{\eeqn}{\end{eqnarray}}



\title{On the Hardness of Learning Regular Expressions}

\author{
Idan Attias\thanks{
Institute for Data, Econometrics, Algorithms, and Learning (IDEAL), hosted by the
University of Illinois at Chicago and Toyota Technological Institute at Chicago; \texttt{idanattias88@gmail.com}.}
\and Lev Reyzin\thanks{University of Illinois at Chicago; \texttt{lreyzin@uic.edu}.}
\and Nathan Srebro\thanks{Toyota Technological Institute at Chicago; \texttt{nati@ttic.edu}.}
\and Gal Vardi\thanks{Weizmann Institute of Science; \texttt{gal.vardi@weizmann.ac.il}.}
}

\date{}
\begin{document}
\maketitle

\begin{abstract}%
    Despite the theoretical significance and wide practical use of regular expressions, the computational complexity of learning them has been largely unexplored.
    We study the computational hardness of improperly learning regular expressions in the PAC model and with membership queries. 
    We show that PAC learning is hard even under the uniform distribution on the hypercube,
    and also prove hardness of distribution-free learning with membership queries. 
    Furthermore, if regular expressions are extended with complement or intersection, we establish hardness of learning with membership queries even under the uniform distribution.
    We emphasize that these results do not follow from existing hardness results for learning DFAs or NFAs, since the descriptive complexity of regular languages can differ exponentially between DFAs, NFAs, and regular expressions.
\end{abstract}

\section{Introduction}

What is the computational complexity of learning a regular expression (RE) of length at most $s$ over $\{0,1\}^n$, or over some other alphabet $\Sigma^n$, as a function of $n, s,$ and $|\Sigma|$? Can this be done in time $\poly(n,s)$? Surprisingly, despite the importance of regular expressions, this question has been left open and not explicitly addressed in the literature. In this paper, we establish hardness of learning regular expressions in the Probably Approximately Correct (PAC) and Membership Queries (MQ) models, both under arbitrary input distributions and under the uniform distribution. Our results are summarized in \Cref{table:results}. 

Learning regular languages has been extensively explored in the literature. In particular, some of the earliest results, both about proper learning in an inductive setting \citep{gold1978complexity,pitt1993minimum} and cryptographic hardness of improper PAC learning \citep{kearns1994cryptographic}, established the hardness of learning Deterministic Finite Automata (DFA). DFAs and REs are indeed equivalent in that both exactly characterize regular languages \citep{kleene1956representation}: any DFA has an equivalent RE, and every RE has an equivalent DFA. Why can’t we then conclude that since DFAs are hard to learn, i.e., regular languages are hard to learn, then REs are hard to learn? The important point is that when we say DFAs or REs are easy or hard to learn, we mean that it is easy or hard to learn languages with \emph{succinct} DFAs or REs. But even though every DFA has an equivalent RE and vice versa, the conversion may require exponential (or even super-exponential) blowups. These gaps are discussed in \Cref{sec:description-re-dfa} and summarized in \Cref{table:descriptional-complexity}. Therefore, even if we cannot learn in time polynomial in the size (or number of states) of a DFA, this does not contradict the possibility of learning in time polynomial in the length of an RE. Indeed, as we can see in \Cref{table:results}, there are differences in the complexity of learning DFAs vs. REs. In particular, for MQ learning, it is possible to learn 
in time polynomial in the size of the DFA, but not polynomial in the length of the RE.

\begin{table}[H]
  {\caption{Tractability of learning regular expressions and automata. Our new hardness results are in bold font.}
  {%
    \setlength{\tabcolsep}{4pt}%
    \renewcommand{\arraystretch}{1.15}%
    \resizebox{\linewidth}{!}{%
      \begin{tabular}{@{}>{\raggedright\arraybackslash}p{0.4\linewidth} *{4}{>{\centering\arraybackslash}p{0.18\linewidth}}@{}}
      \toprule
       & \multicolumn{2}{c}{\textbf{PAC}} & \multicolumn{2}{c}{\textbf{PAC + MQ}} \\
      \cmidrule(lr){2-3}\cmidrule(lr){4-5}
      \textbf{Representation complexity} & Dist.-free & Uniform dist. & Dist.-free & Uniform dist. \\
      \midrule
      $\poly(n)$ size $\mathsf{DFA}$           & \CellNote{Hard}{A,B,C,D} & \CellNote{Hard}{C} & Tractable & Tractable \\
      $\poly(n)$ size $\mathsf{NFA}$           & \CellNote{Hard}{A,B,C,D} & \CellNote{Hard}{C} & \CellNote{Hard}{A,\textbf{B,C,D}}      & ? \\
       $\poly(n)$ length $\mathsf{RE}$ (with only $\vee,\cdot,^{\star})$ & \CellNote{\textbf{Hard}}{A,B,C,D} & \CellNote{\textbf{Hard}}{C} & \CellNote{\textbf{Hard}}{B,C,D} & ? \\
      $\poly(n)$ length $\mathsf{RE}$ with intersection    & \CellNote{\textbf{Hard}}{A,B,C,D} & \CellNote{\textbf{Hard}}{A,C} & \CellNote{\textbf{Hard}}{A,B,C,D} & \CellNote{\textbf{Hard}}{A} \\
      $\poly(n)$ length $\mathsf{RE}$ with complement   & \CellNote{\textbf{Hard}}{A,B,C,D} & \CellNote{\textbf{Hard}}{A,C} & \CellNote{\textbf{Hard}}{A,B,C,D} & \CellNote{\textbf{Hard}}{A} \\
      \bottomrule
      \end{tabular}
    }

    \vspace{3pt}
    \begin{minipage}{\linewidth}
      \small 
      \begin{itemize}[leftmargin=1em,labelsep=0.4em,itemsep=1pt,topsep=0pt]
        \item[\textsuperscript{A}] Under one of the assumptions: RSA, factoring Blum integers, or quadratic residues.
        \item[\textsuperscript{B}] Assumption: random \ksat.
        \item[\textsuperscript{C}] Assumption: local-PRG.
        \item[\textsuperscript{D}] Assumption: sparse learning-parity-with-noise.
      \end{itemize}
\vspace{-5pt}\dashrule[\linewidth]{6pt}{0.4pt}{4pt} \vspace{-5pt}
      \begin{itemize}[leftmargin=1em,labelsep=0.4em,itemsep=1pt,topsep=0pt]
        \item \underline{DFA}: \citet{kearns1994cryptographic} showed distribution-free PAC hardness under Assumption~A; \citet{daniely2016complexity} established it under Assumption~B; and \citet{daniely2021local} under Assumption~C. Moreover, \citet{daniely2021local} proved PAC hardness on the uniform distribution (again under Assumption~C). There is a PAC-preserving reduction from DNFs to DFAs \citep{pitt1990prediction}; therefore, the hardness of learning DNFs under Assumption~D proved by \citet{applebaum2025structured} immediately implies the hardness of learning DFAs.
        In contrast, DFAs are tractably learnable with membership queries via the $L^\star$ algorithm of \citet{angluin1987learning}.
        \item\underline{NFA}: PAC hardness follows directly from the hardness of learning DFAs. 
        Moreover, for distribution-free learning with membership queries, \citet{angluin1991wont} proved hardness under Assumption A, and since REs are expressible using NFAs with polynomial blowup, our results for REs imply hardness under Assumptions B,C,D. 
        \item \underline{Plain RE}: We establish distribution-free hardness in PAC 
        under assumptions B,C,D (see 
        \Cref{thm:hardness-from-DNF}). \citet{angluin2013learnability} proved hardness with a \emph{non-binary} alphabet of a subclass of regular expressions, called shuffle ideals, under assumption A. 
        We claim that their construction can be modified to work for plain REs and the binary alphabet, see discussion in \Cref{sec:comparison-shuffle}.\\
        In \Cref{thm:re-pac-uniform} we prove PAC hardness under the uniform distribution, relying directly on Assumption~C.\\
        In Theorem~\ref{thm:dist-free-queries} we show distribution-free hardness of learning with queries under Assumptions B,C,D (and the existence of non-uniform one-way functions).
        \item \underline{RE with intersection or complement}: 
        First, all hardness results for plain REs also apply here. Second, 
        in \Cref{thm:ere-pac+mq-hardness}, we prove hardness with membership queries on the uniform distribution under Assumption~A.
        \item \underline{Open Questions}:
        As far as we are aware, it remains unknown whether there is a tractable algorithm for learning plain REs or NFAs with membership queries under the uniform distribution. 
      \end{itemize}
      \vspace{-5pt}
      \noindent\rule{\linewidth}{0.4pt}
    \end{minipage}
  }%
  \label{table:results}
}
\end{table}

Learning regular languages, and REs in particular, is of practical as well as conceptual importance. Many rules in everyday data processing, manipulation, filtering, and analysis tasks are specified using regular expressions (e.g. validating numbers, extracting street names from addresses, or rewriting names as last–comma–first). REs are at the core of many languages and tools focused on data processing (e.g., grep, awk, and Perl), and are important elements of many others (e.g., Excel, Python, and almost all modern programming languages and shells). Thinking of machine learning as replacing expertly specified rules with learning from examples, being able to learn an RE from examples is thus a very natural task, and indeed a practical feature (e.g., given a column of addresses and a few examples of the house number, the machine should learn to extract the rest). Importantly, in all these languages, regular languages are specified by short REs, not DFAs. In addition to the classical REs as specified by Kleene, which support union, concatenation, and Kleene-star operations, several extensions to REs have also been suggested and are commonly supported by programming languages and libraries. We discuss these in \Cref{sec:re-regular-lang}.

Learning Regular Languages has also recently received renewed attention, with the study of what Regular Languages transformers can represent and learn (e.g., \citet{strobl2024formal} and the references therein). Indeed, this has been the direct impetus for our study.  Part of the goal of this paper is to emphasize the difference between different representations of Regular Languages, and in particular between DFAs and REs, to highlight the importance of studying learnability quantitatively as a function of a specific description language, and to encourage the study of learnability in terms of Regular Expression length.
%


\section{REs and Regular Languages}\label{sec:re-regular-lang}

We focus on the binary alphabet $\Sigma=\{0,1\}$. $\Sigma^\star$ defines the set of all binary strings, where $\Sigma^\star=\cup_{n\in \mathbb{N}}\Sigma^n$, with $\mathbb{N}$ including zero. 
\paragraph{Regular Expressions (RE).}
The set of regular expressions over an alphabet $\Sigma$, denoted $\RE(\Sigma)$, is the minimal set defined as follows: $\emptyset$, $\epsilon$, and every symbol $a \in \Sigma$ belong to $\RE(\Sigma)$. If $R_1, R_2 \in \RE(\Sigma)$, then the expressions
$(R_1 \vee R_2)$ (union), $(R_1 R_2)$ (concatenation, also denoted $(R_1 \cdot R_2)$), and $(R_1)^{\star}$ (Kleene star) are also in $\RE(\Sigma)$.

For $R \in \RE(\Sigma)$ the language it represents, written $L(R)\subseteq\Sigma^{\star}$, is defined inductively:  $L(\emptyset)=\varnothing$, $L(\epsilon)=\{\epsilon\}$, $L(a)=\{a\}$ for $a\in\Sigma$. $L(R_1\vee R_2)=L(R_1)\cup L(R_2)$, $L(R_1 R_2)=\{xy:x\in L(R_1),y\in L(R_2)\}$, and $L(R^{\star})=\bigcup_{k \in \mathbb{N}}L(R)^k$, where $L(R)^0=\{\epsilon\}$ and $L(R)^{k+1}=L(R)^kL(R)$. 
A language $L\subseteq\Sigma^{\star}$ is called \emph{regular} if $L=L(R)$ for some $R \in \RE(\Sigma)$. We use the shorthand $\RE$ for $\RE(\Sigma)$. 

\paragraph{Extended Regular Expressions.}
Having fixed the standard syntax and semantics of regular expressions, we now extend the language by adding new standard operators.

\vspace{0.1cm}
\emph{Intersection operator:}
The set $\REcap$ is obtained by closing $\re$ under intersection:
if $R_1,R_2 \in \re(\cap)$, then $(R_1 \wedge R_2) \in \re(\cap)$.  
The semantics is defined by $L(R_1 \wedge R_2) = L(R_1) \cap L(R_2).$

\vspace{0.1cm}
\emph{Complement (negation) operator:}
The set $\re(\neg)$ is obtained by closing $\re$ under complement:
if $R \in  \re(\neg)$, then $(\neg R) \in \re(\neg)$.  
The semantics is defined by $L(\neg R) = \Sigma^\star \setminus L(R).$

\vspace{0.1cm}
\emph{Counting operator:}
We also extend the language with a bounded-iteration (counting) operator, an operator supported in practice by tools such as egrep \citep{hume1988tale}, Perl regular-expression syntax \citep{wall1999programming}, and the XML Schema specification \citep{xml_schema}.  
For a regular expression $R$ and an integer $k\in\mathbb{N}$, we write the counting expression as $R^{k}$.
Its semantics is given by
$L\!\bigl(R^{k}\bigr) = L(R)^{k}$,
that is, $R^{k}$ matches any word that can be decomposed into $k$
contiguous factors, each belonging to $L(R)$.
We denote the resulting class of \emph{counting regular expressions} by $\RE(\#)$. 
For related variants of the definition, succinctness results, and associated computational problems, see \cite{meyer1972equivalence,kilpelainen2003regular,gelade2012regular,gelade2010succinctness}.
We use the notation $R^k$ as shorthand even when explicit counting is not permitted. In such cases, it should be understood as repeated concatenation. When counting is allowed, the encoding simply becomes more succinct.

\paragraph{Size of regular expressions.}
We define the size of a (possibly extended) regular expression $R$, denoted by $\lvert R\rvert$, as the
total number of atomic and operators that appear in the concrete syntax of $R$:
$\lvert \varnothing\rvert = 1$, $\lvert \varepsilon\rvert = 1$, and $\lvert a\rvert = 1$ for $a\in\Sigma$.
$\lvert R_1 R_2\rvert = \lvert R_1\rvert+\lvert R_2\rvert$, 
    $\lvert R_1 \vee R_2\rvert = \lvert R_1\rvert+\lvert R_2\rvert+1$, 
    $\lvert r^{\star}\rvert = \lvert r\rvert+1$, $\lvert R_1 \wedge R_2\rvert = \lvert R_1\rvert+\lvert R_2\rvert+1$, $\lvert \lnot R \rvert = \lvert R \rvert +1$,
    and $\lvert r^{k}\rvert = \lvert r\rvert+\lceil\log(k)\rceil$, where counting is allowed and $k$ is encoded in binary, and otherwise  $\lvert r^{k}\rvert = k\lvert r\rvert$.
For alternative size measures (all equivalent up to a polynomial factor) and detailed succinctness results, see
\cite{ellul2005regular,gelade2010succinctness,gruber2015finite}.

\paragraph{DFAs and NFAs.}
A Nondeterministic Finite Automaton (NFA) is a tuple 
$A=(\Sigma,Q,Q_0,\delta,F)$, where $\Sigma$ is a finite alphabet, 
$Q$ is a finite set of states, $Q_0 \subseteq Q$ is the set of initial states, 
$\delta: Q \times \Sigma \to 2^Q$ is the transition function, and $F \subseteq Q$ 
is the set of final states. Given a word $w = \sigma_1 \cdots \sigma_\ell \in \Sigma^*$, 
a run of $A$ on $w$ is a sequence of states $r = q_0, \ldots ,q_\ell$ such that 
$q_0 \in Q_0$ and $q_{i+1} \in \delta(q_i,\sigma_{i+1})$ for all $i \geq 0$. 
The run $r$ is accepting if $q_\ell \in F$. We say that $A$ accepts $w$ if 
it has an accepting run on $w$, and we denote by $L(A)$ the language of $A$, 
namely the set of words it accepts. When $|Q_0|=1$ and 
$|\delta(q,\sigma)| \leq 1$ for all $q \in Q$ and $\sigma \in \Sigma$, then $A$ 
is deterministic. In this case we say that $A$ is a Deterministic Finite Automaton (DFA).
\subsection{Equivalence and gaps between REs and automata}\label{sec:description-re-dfa}

In this section, we review the vast literature on the succinctness of representation
among different variants of REs, DFAs,
and NFAs, see \Cref{table:descriptional-complexity}.
A central claim of this paper is that the \emph{description length} of objects
representing regular languages plays a crucial role in determining their \emph{tractable learnability}.

We compare these models by analyzing their description lengths, i.e., the length of a binary
string encoding the object. For DFAs with $q$ states (over a fixed alphabet), the description
length is upper bounded by $O(q\log q)$. For NFAs with $q$ states, a dense encoding of
transition sets yields $O(q^2)$ bits. Although one may alternatively take the number
of states as the size parameter (a common convention), this differs only by a polynomial factor.
Importantly, the choice of size measure affects the analysis of conversions: for example,
the transformation from a DFA to an NFA is linear in the number of states, whereas when measured
in terms of description length the target can scale as $\Theta(q^2)$ rather than $\Theta(q\log q)$.

For regular expressions, the description length is defined as the size of the expression,
namely, the number of symbols and operators it contains.
Parentheses can be disregarded, as they only affect the size by a constant multiplicative
factor.\footnote{For a discussion of size definitions and their implications see
\cite{ellul2005regular,gelade2010succinctness,gruber2015finite}.}
Thus, for a constant-size alphabet $\Sigma$ and a constant operator set $\Gamma$
(including $\varepsilon$ and $\emptyset$), an RE of length $\ell$
has description length $\ell \cdot \big\lceil \log\big(|\Sigma|+|\Gamma|+2\big)\big\rceil = O(\ell)$.
If the syntax is extended with counting operators, then each integer up to $N$
requires $O(\log N)$ bits to encode, and the resulting description length becomes $O(\ell \log N)$.

\begin{table}[h]
\captionsetup{justification=raggedright,singlelinecheck=false}
\caption{Succinctness/translation blow-ups among $\dfa$, $\nfa$, $\RE$, $\REcap$, and $\REneg$.}
\label{table:descriptional-complexity}
\setlength{\tabcolsep}{3pt}%
\renewcommand{\arraystretch}{1.15}%
\resizebox{\linewidth}{!}{%
\begin{tabular}{cccccc}
\toprule
\textbf{Source $\to$ Target} & $\dfa$ & $\nfa$ & $\RE$  & $\REcap^\dag$ & $\REneg^\dag$\\
\midrule
\begin{tabular}[c]{@{}c@{}}
    $\dfa$ \\
    (description length  or $\#$states)
  \end{tabular}
  & --
  & $\poly(n)$
  & \begin{tabular}[c]{@{}c@{}}$\expb$ \\
  \cite{ehrenfeucht1976complexity}\\ \cite{gruber2015finite}\end{tabular}
  & $2^{O(n)}$
  & $2^{O(n)}$ \\
\midrule
\begin{tabular}[c]{@{}c@{}}
    $\nfa$ \\
    (description length  or $\#$states)
  \end{tabular}
  & \begin{tabular}[c]{@{}c@{}}%
  $2^{\Theta(n)}$\\
   \cite{rabin1959finite}%
   \\
   \cite{moore1971bounds,meyer1971economy}
\end{tabular}
  & --
  & \begin{tabular}[c]{@{}c@{}}$\expb$ \\
  \cite{gruber2015finite}\\ \cite{ehrenfeucht1976complexity}\end{tabular}
  & $2^{O(n)}$
  & $2^{O(n)}$ \\
\midrule
\begin{tabular}[c]{@{}c@{}} 
    $\RE$ \\ (length)
  \end{tabular}
  & \begin{tabular}[c]{@{}c@{}}$\expb$ \\
  \cite{gruber2015finite}\\ \cite{ehrenfeucht1976complexity}\end{tabular}
  & \begin{tabular}[c]{@{}c@{}}$\poly(n)$\\ \cite{thompson1968programming}\\ 
  \cite{gruber2015finite}\end{tabular}
  & --
  & $\poly(n)$
  & $\poly(n)$ \\
\midrule
\begin{tabular}[c]{@{}c@{}} 
    $\REcap$ \\ (length)
  \end{tabular}
  & \begin{tabular}[c]{@{}c@{}} $\dexpb$ \\
  \cite{gelade2010succinctness}
  \end{tabular}
  & \begin{tabular}[c]{@{}c@{}} $\expb$ \\
  \cite{gelade2010succinctness}
  \end{tabular}
  & \begin{tabular}[c]{@{}c@{}} $\dexpb$ \\
  \cite{gelade2012succinctness}
  \end{tabular}
  & --
  & $\poly(n)$
   \\
\midrule
\begin{tabular}[c]{@{}c@{}} 
    $\REneg$ \\ (length)$^*$
  \end{tabular}
  & \begin{tabular}[c]{@{}c@{}} $\underbrace{2^{2^{\cdot^{\cdot^{2}}}}}_{\Theta(n)}$ \\
  \cite{dang1973complexity}\\
  \cite{stockmeyer1973word}
  \end{tabular}
  & \begin{tabular}[c]{@{}c@{}} $\underbrace{2^{2^{\cdot^{\cdot^{2}}}}}_{\Theta(n)}$ \\
  \cite{dang1973complexity}\\
  \cite{stockmeyer1973word}
  \end{tabular}
  &  \begin{tabular}[c]{@{}c@{}} $\underbrace{2^{2^{\cdot^{\cdot^{2}}}}}_{\Theta(n)}$ \\
  \cite{dang1973complexity}\\
  \cite{stockmeyer1973word}
  \end{tabular}
  &  $\underbrace{2^{2^{\cdot^{\cdot^{2}}}}}_{O(n)}$ 
  & -- \\
\bottomrule
\end{tabular}%
}
\\[0.3em]  
\footnotesize
$^*$ Depth-sensitive succinctness for conversions from RE($\neg$) to RE/DFA/NFA:\\
The complexity of the conversion grows with the complement-nesting depth of the regular expression. Thus, with unbounded depth the blow-up is non-elementary. For a single negation (depth~1), translating back to a plain RE already requires $2^{2^{\Theta(n)}}$ symbols \citep{gelade2012succinctness}. See also~\citet{gelade2010succinctness} for the non-elementary succinctness of RE($\neg$) versus automata.
\\
$\dag$ We are unsure about the tightness of the non-polynomial upper bounds in these columns.
\\
\vspace{-5pt}\noindent\rule{\linewidth}{0.4pt}
\end{table}

\section{Learning Models}
A concept class $\cC$ is a series of collections of functions $\cC_n\subseteq \{0,1\}^{\cX_n}$, for $n\in\mathbb{N}$, where $\cX_n=\{0,1\}^n$. Sometimes we abuse the notation and identify $\cC$ with $\cC_n$ where the meaning is clear from the context.

\paragraph{PAC learning \citep{valiant1984theory}.}
A (possibly randomized) algorithm $A$ is said to \emph{PAC learn} $\cC_n$ if, for all $\epsilon, \delta \in (0,1)$, for all $n \in \mathbb{N}$, for all $c^\star \in \cC_n$, and for all distributions $\cD_n$ over $\cX_n$, the following holds:  
Given access to i.i.d.\ examples $(x, c^\star(x))$ drawn from $\cD_n$, the algorithm $A$ outputs a (description of) hypothesis $h$ such that, with probability at least $1-\delta$ (over the random draw of examples and the internal randomness of $A$), $\Pr_{x \sim \cD_n} \big[ h(x) \neq c^\star(x) \big] \le \epsilon.$ 

We say that $\cC_n$ is \emph{tractably learnable} if there exists a learning algorithm $A$ and a polynomial $p$ such that, for every $n, \epsilon, \delta$, every distribution $\cD_n$, and every $c^\star \in \cC_n$, the algorithm $A$ PAC learns $\cC_n$ and runs in time at most $p(n,1/\epsilon,1/\delta)$.

The learner may output an arbitrary hypothesis $h : \cX_n \to \{0,1\}$ (not necessarily in $\cC_n$), provided that $h$ can be evaluated in polynomial time in $(n,1/\epsilon,1/\delta)$. This setting is also known as improper learning (or representation-independent learning).

\paragraph{Learning with membership queries \citep{angluin1987learning,angluin1988queries}.}
In the membership query (MQ) model, the learner is given the additional ability to actively request the label of any instance of its choice. Formally, besides receiving i.i.d.\ labeled examples $(x, c^\star(x))$ drawn from $\cD_n$, the learner may adaptively query an oracle on any $x \in \cX_n$ to obtain $c^\star(x)$. A learning algorithm $A$ is said to \emph{PAC+MQ learn $\cC_n$} if it satisfies the same accuracy and confidence guarantees as in the PAC model. Tractability is defined in the same way as for PAC learning.

 We also consider $\gamma$-weak learnability, where the error of the hypothesis returned by the learning algorithm is just slightly better than a random guess, that is, $\Pr_{x \sim \cD_n} \big[ h(x) \neq c^\star(x) \big]\leq 1/2-\gamma$, where $\gamma>0$ is either a constant or $\frac{1}{\poly(n)}$.
Computational hardness results for improper learning in a distribution-free setting are translated to the hardness of
improper weak learning, since, by using boosting algorithms \cite{schapire1990strength,freund1995boosting,schapire2013boosting}, if there is an efficient algorithm with error at most $\frac{1}{2}-\frac{1}{\poly(n)}$, then there is an efficient boosting algorithm that learns $\cC_n$. When we consider hardness of weak learning without specifying $\gamma$ we mean that there is no tractable $\gamma$-weak learning algorithm for any inverse-polynomial $\gamma$.


\section{Hardness of PAC Learning}\label{sec:hardness-re-pac}

In this section, we prove the hardness of PAC learning REs under several standard assumptions, both under arbitrary input distributions and under the uniform distribution. 

\subsection{Distribution-free hardness} \label{sec:dist-free-PAC}

We start with a distribution-free hardness result, obtained via an efficient
prediction-preserving reduction from PAC learning Disjunctive Normal
Forms (DNFs) to learning regular expressions. 
Then, 
known improper hardness results for DNFs in the PAC model directly implies the improper hardness of learning regular
expressions.

We define DNFs as follows. Fix a set of Boolean variables
$X=\{x_1,\ldots,x_n\}$. A literal is either a variable $x_i$ or its
negation $\lnot x_i$. A term (or clause) is a conjunction of one or
more distinct literals, $T = \ell_{i_1}\wedge \ell_{i_2}\wedge\cdots\wedge \ell_{i_k}$,
where each $\ell_{i_j}$ is a literal over $X$ and no variable appears twice
in $T$. The width of $T$ is $k$, and can be at most $n$.
A DNF formula is a disjunction of terms,
$\Phi = T_{1} \vee T_{2} \vee \cdots \vee T_{m}$,
where $m$ is a function of $n$.
The size of $\Phi$ is defined as its total number of literals,
$|\Phi|=\sum_{j=1}^{m}|T_j|$,
with $|T_j|$ equal to the width of $T_j$.
For an assignment $\boldx\in\{0,1\}^{n}$, the value $\Phi(\boldx)\in\{0,1\}$ is
obtained under the usual Boolean semantics.

\begin{lemma}[Polynomial-Size DNF $\to$ RE Translation]\label{lem:dnf-to-re}
Let $\Phi:\{0,1\}^n\to\{0,1\}$ be a DNF with $m(n)$ terms.
There is a plain regular expression $R_\Phi$ over $\{0,1\}$ of size $O(mn)$ such that
for every $\boldx\in\{0,1\}^n$, $\boldx \in L(R_\Phi) \leftrightarrow \Phi(\boldx)=1$.
\end{lemma}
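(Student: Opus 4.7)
The plan is to build the regular expression term-by-term, exploiting the fact that each term of the DNF fixes some coordinates and leaves the others unconstrained. Concretely, for each term $T_j$, I will define a regular expression $R_j$ that matches exactly the strings in $\{0,1\}^n$ satisfying $T_j$, and then take $R_\Phi = R_1 \vee R_2 \vee \cdots \vee R_m$.

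To construct $R_j$, I walk through the $n$ positions. For each position $i \in [n]$, if $x_i$ appears positively in $T_j$, I place the symbol $1$; if $\neg x_i$ appears in $T_j$, I place the symbol $0$; and if neither literal on $x_i$ appears in $T_j$, I place the gadget $(0 \vee 1)$. Then $R_j$ is the concatenation of these $n$ per-position expressions. The key property is that the ``don't care'' gadget $(0\vee 1)$ consumes exactly one alphabet symbol, so $R_j$ only matches strings of length exactly $n$, and among those it matches precisely the assignments in which every literal of $T_j$ evaluates to true. Thus $L(R_j)\cap\{0,1\}^n = \{\boldx \in \{0,1\}^n : T_j(\boldx)=1\}$, and in fact $L(R_j) \subseteq \{0,1\}^n$.

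For the size bound, each of the $n$ positions in $R_j$ contributes at most $O(1)$ symbols to the size, so $|R_j| = O(n)$. Combining with $m-1$ union operators gives $|R_\Phi| = O(mn)$, matching the claim.

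Finally, I verify the correctness statement. For $\boldx \in \{0,1\}^n$, $\boldx \in L(R_\Phi)$ iff $\boldx \in L(R_j)$ for some $j$, iff some term $T_j$ is satisfied by $\boldx$, iff $\Phi(\boldx)=1$. There is no real obstacle here; the only thing to be careful about is using a gadget that forces each unconstrained position to consume exactly one character (hence $(0\vee 1)$ rather than, e.g., $(0\vee 1)^\star$), so that different terms cannot align their wildcards against strings of different lengths.
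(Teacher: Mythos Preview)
Your proposal is correct and matches the paper's proof essentially verbatim: the paper also builds $R_\Phi$ as a union over terms, encoding each term $T_j$ as a concatenation of per-position symbols $1$, $0$, or $(0\vee 1)$ according to whether $x_i$, $\neg x_i$, or neither appears in $T_j$, and observes the $O(mn)$ size bound in the same way.
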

\begin{proof}
Write $\Phi=\bigvee_{j=1}^m T_j$, where each term $T_j$ is a conjunction of
literals over $\{x_1,\ldots,x_n\}$.
For a given term $T_j$, define a regular expression $R_{T_j}=\gamma^{(j)}_1\gamma^{(j)}_2\cdots\gamma^{(j)}_n$,
where for each position $i$,
\[
\gamma^{(j)}_i =
\begin{cases}
1 & \text{if the literal } x_i \text{ occurs in } T_j,\\
0 & \text{if the literal } \neg x_i \text{ occurs in } T_j,\\
(0\vee 1) & \text{if } x_i \text{ does not occur in } T_j.
\end{cases}
\]
Set $R_\Phi = \big(R_{T_1} \vee \cdots \vee R_{T_{m(n)}}\big)$.
By construction, $R_{T_j}$ matches exactly those length-$n$ strings that satisfy
the constraints of $T_j$, and $R_\Phi$ matches a string iff at least one term holds,
i.e., iff $\Phi(\boldx)=1$.  The size is $O(n)$ per term, so $|R_\Phi|=O(mn)$.

For example, let $\Phi= \lr{x_1\wedge \lnot x_3 \wedge x_4}\vee \lr{\lnot x_2 \wedge x_3}$ be a DNF over variables $\{x_1,x_2,x_3,x_4\}$. The Equivalent RE would be 
$R_\phi= \lr{1(0\vee 1)01} \vee  \lr{(0\vee 1)01(0\vee 1)}$.
\end{proof}

The above lemma shows that poly-sized DNFs can be expressed by poly-sized REs. Hence, hardness of learning DNFs implies hardness of learning REs.
Thus, we establish the hardness of learning REs via known results on DNFs. 

Distribution-free hardness of PAC learning DNFs was established under the random $K$-SAT assumption \citep{daniely2016complexity}. 
Informally, this assumption states that 
no polynomial-time algorithm can efficiently refute (certify unsatisfiability of) typical random $K$-SAT formulas.
Under the local pseudorandom generator (local PRG) assumption, tighter distribution-free hardness and distribution-specific hardness for a product distribution (distinct from the uniform distribution) were established in \cite{daniely2021local}. 
See \Cref{sec:re-uniform} for the formal definition of local PRGs, where we use it explicitly to show hardness of learning REs under the uniform distribution. 
Finally, under a variant of the sparse learning parity with noise assumption, \citet{applebaum2025structured} proved distribution-free hardness of learning DNFs. 
Informally, this assumption states that even when each linear equation involves only a few variables, it is still computationally hard to recover the hidden parity in the presence of random noise.

\begin{theorem}[Distribution-Free Hardness of PAC Learning REs]\label{thm:hardness-from-DNF} 
Assuming either the local-PRG assumption (Assumption~\ref{ass:local-prg}), the random $K$-SAT assumption \citep{daniely2016complexity}, or the sparse learning-parity-with-noise assumption from \citet{applebaum2025structured}, 
for any constant $\epsilon>0$, there is no tractable algorithm that weakly PAC learns the concept class $$  \cC_n = \bigl\{ c : \{0,1\}^n \to \{0,1\} \bigm| c \text{ is representable by a regular expression of size}\leq n^\epsilon \bigr\}.$$
\end{theorem}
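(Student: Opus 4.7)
My plan is to give a black-box, prediction-preserving reduction from PAC learning polynomial-size DNFs to PAC learning REs, invoking the translation in \Cref{lem:dnf-to-re}. Under each of the three cited assumptions, the literature already rules out efficient weak PAC learning of polynomial-size DNFs, and the reduction will transfer this obstruction to the regular-expression concept class $\cC_n$.

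Concretely, I will assume toward contradiction that a polynomial-time algorithm $A$ weakly PAC learns $\cC_n$. Given a target DNF $\Phi$ over $\{0,1\}^n$ and i.i.d.\ labeled examples $(x,\Phi(x))$ drawn from an arbitrary distribution $\cD$ over $\{0,1\}^n$, I take $R_\Phi$ to be the RE produced by \Cref{lem:dnf-to-re}, and observe that $\Phi(x)=\mathbf{1}[x\in L(R_\Phi)]$ for every $x\in\{0,1\}^n$. The same samples are therefore correctly labeled for the RE target $R_\Phi$ under $\cD$, so running $A$ on them returns a hypothesis $h$ whose $\cD$-error as a predictor of $R_\Phi$ equals its error as a predictor of $\Phi$. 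This would contradict the assumed DNF hardness.

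The one delicate step is matching the size bound $n^\epsilon$ claimed in the theorem. Each cited DNF hardness holds at some polynomial size $m(n)\le n^{c}$, so by \Cref{lem:dnf-to-re} the induced RE has size $O(mn)=O(n^{c+1})$, which immediately gives RE hardness at exponent $\epsilon\ge c+1$. To obtain hardness at an arbitrary small constant $\epsilon>0$, I will pad the input dimension: embed the $n$-variable DNF into dimension $N=\mathrm{poly}(n)$ by making it depend only on the first $n$ coordinates, and take the extended RE $(R_{T_1}\vee\cdots\vee R_{T_m})\cdot(0\vee 1)^{N-n}$, which has size $O(mn+N)$ in the plain setting (or $O(mn+\log N)$ if the counting operator is permitted). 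Choosing $N$ sufficiently large in terms of $m,n,\epsilon$ makes this bound at most $N^\epsilon$, and the padded DNF retains the original hardness since any learner for the padded task yields a learner for the original. Weak versus strong learnability is finally reconciled through the standard distribution-free boosting argument recalled in the learning-models section.

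The main obstacle I expect is exactly this size-matching step---specifically, verifying that each of the three DNF hardness statements is genuinely robust under dimension padding (routine, but one has to check for each assumption that the padded DNF family still falls within the hypothesis of the lower bound). Beyond that, the argument is a direct invocation of \Cref{lem:dnf-to-re} together with the imported DNF lower bounds.
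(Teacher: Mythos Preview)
Your overall strategy---reduce from DNF hardness via \Cref{lem:dnf-to-re}, then pad dimensions to reach any target exponent $\epsilon$---is exactly the paper's, and the reduction itself is fine. The gap is in your padding step. You write the padded RE as $(R_{T_1}\vee\cdots\vee R_{T_m})\cdot(0\vee 1)^{N-n}$ with plain-RE size $O(mn + N)$, and then claim that ``choosing $N$ sufficiently large \ldots\ makes this bound at most $N^\epsilon$.'' But for $\epsilon<1$ this is impossible: the tail $(0\vee 1)^{N-n}$ alone has size $\Theta(N)$, so the total RE size is $\Omega(N) > N^\epsilon$ for all large $N$. Increasing $N$ goes in the wrong direction. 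Your parenthetical about the counting operator does not rescue the argument, since the theorem is about plain REs.

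The fix is immediate once noticed: replace the explicit tail by a Kleene star. Using $R_\Phi\cdot(0\vee 1)^\star$ (or, if you pad inputs with zeros as the paper does, $R_\Phi\cdot 0^\star$) gives a tail of constant size, so the padded RE has size $O(mn)$ independent of $N$, and now taking $N$ a sufficiently large polynomial in $n$ yields $O(mn)\le N^\epsilon$. The paper also sharpens the starting point by noting that the cited DNF lower bounds already hold with only $m(n)=\log n$ (or $\log^2 n$ under the $K$-SAT assumption) terms, giving base RE size $O(n\log n)$ before padding. Finally, the ``main obstacle'' you flag---robustness of DNF hardness under dimension padding---is a non-issue: these are distribution-free lower bounds, so the padded instance is simply another instance with a different input distribution.
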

\begin{proof}
Under the local-PRG assumption \citep{daniely2021local}, or the hardness assumption for sparse learning parity with noise \citep{applebaum2025structured}, weakly learning DNFs with $m(n)=\omega(1)$ terms is hard.
Take $m(n)=\log(n)$, then by Lemma~\ref{lem:dnf-to-re}, it is hard to weakly learn REs of size $O(n\log(n))$. 
By a standard scaling argument, it implies that for any constant $\epsilon>0$, it is hard to learn REs of size $n^\epsilon$. Indeed, define $\tilde n = n^{1+1/\epsilon}$, and note that $\tilde n ^\epsilon= n^{1+\epsilon}$ is larger than $O(n\log(n))$ for a sufficiently large $n$. 
For an example $(\boldx,y) \in \{0,1\}^n \times \{0,1\}$ realizable by an RE of size $O(n \log(n))$, by padding the input $\boldx$ with zeros, we obtain $\tilde \boldx \in \{0,1\}^{\tilde n}$ where $(\tilde \boldx,y)$ is realizable by an RE of size at most $\tilde{n}^\epsilon$.
Thus, 
it is hard to weakly learn REs over $\{0,1\}^{\tilde n}$ of size $\tilde n ^\epsilon$.

Under the $K$-SAT assumption, weakly learning DNFs with $m(n)=\omega(\log(n))$ terms is hard \citep{daniely2016complexity}. Take $m(n)=\log^2(n)$, then by Lemma~\ref{lem:dnf-to-re}, it is hard to learn REs of size $O(n\log^2(n))$.
By a similar scaling argument, we get that it is hard to weakly learn REs of size $n^\epsilon$.
\end{proof}

\subsection{Hardness under the uniform distribution}\label{sec:re-uniform}

For the uniform distribution, the DNF route is insufficient: it is open whether DNFs are tractably learnable under the uniform distribution, and the best known algorithms are quasi-polynomial \citep{verbeurgt1990learning,linial1993constant}.

Therefore, we follow the approach introduced by \cite{daniely2021local}, showing that if regular expressions could be weakly learned tractably, then one could distinguish random strings from pseudorandom strings generated by a local pseudorandom generator. Their framework was used to prove distribution-free and distribution-specific hardness of PAC learning for several important concept classes, including DFAs. However, for our purpose, we need a construction tailored specifically to regular expressions.

\paragraph{Local pseudorandom generators (PRGs).}
An $(n,m,k)$-hypergraph is a hypergraph over vertex set $[n]$ with $m$ (ordered) hyperedges $E_1,\dots,E_m$, each of cardinality $k$, where all vertices in a hyperedge are distinct. We denote by $\mathcal{G}_{n,m,k}$ the distribution obtained by selecting each hyperedge independently and uniformly from all $n \cdot (n-1) \cdots (n-k+1)$ possible ordered $k$-tuples.

Let $P:\{0,1\}^k \to \{0,1\}$ be a predicate, and let $G$ be an $(n,m,k)$-hypergraph. Goldreich’s pseudorandom generator (PRG) \citep{goldreich2000candidate} is defined as
\[
f_{P,G} : \{0,1\}^n \to \{0,1\}^m, \qquad 
f_{P,G}(\boldx) = \bigl(P(\boldx|_{E_1}), \dots, P(\boldx|_{E_m})\bigr).
\]
The integer $k$ is called the \emph{locality} of the PRG. If $k$ is constant, the PRG is said to be \emph{local}. The PRG has \emph{polynomial stretch} if $m = n^s$ for some constant $s>1$. We denote by $\mathcal{F}_{P,n,m}$ the family of functions $f_{P,G}$ where $G$ ranges over $(n,m,k)$-hypergraphs. Sampling from $\mathcal{F}_{P,n,m}$ means choosing $G \sim \mathcal{G}_{n,m,k}$. We write $G \xleftarrow{R} \mathcal{G}_{n,m,k}$ for a random choice of $G$, and $\boldx \xleftarrow{R} \{0,1\}^n$ for a uniformly random string.

The family $\mathcal{F}_{P,n,m}$ is an $\varepsilon$-pseudorandom generator (PRG) if for every probabilistic polynomial-time algorithm $A$, the distinguishing advantage
\[
\left|
\Pr_{G \xleftarrow{R} \mathcal{G}_{n,m,k}, \boldx \xleftarrow{R} \{0,1\}^n}
  \bigl[A(G,f_{P,G}(\boldx))=1\bigr]
-
\Pr_{G \xleftarrow{R} \mathcal{G}_{n,m,k}, \boldy \xleftarrow{R} \{0,1\}^m}
  \bigl[A(G,\boldy)=1\bigr]
\right|
\]
is at most $\varepsilon$.

\begin{assumption}[Local PRG Assumption]\label{ass:local-prg}
For every constant $s>1$, there exists a constant $k$ and a predicate $P:\{0,1\}^k \to \{0,1\}$ such that $\mathcal{F}_{P,n,n^s}$ is a $\tfrac{1}{3}$-PRG.
\end{assumption}

Requiring a constant distinguishing advantage is weaker than the more common requirement of negligible advantage (see, e.g., \cite{applebaum2016cryptography,applebaum2016pseudorandomness,couteau2018pseudorandomness}). 
Local PRGs with polynomial stretch have been extensively studied and applied, for example in secure computation with constant overhead and in general-purpose obfuscation via constant-degree multilinear maps \citep{ishai2008secure,applebaum2017low,lin2016indistinguishability,lin2016generalized}. 
A key theoretical foundation was provided by \citet{applebaum2013cryptographic}, who showed that the above assumption holds if there exists a sensitive local predicate $P$ such that $\mathcal{F}_{P,n,n^s}$ is one-way, a variant of Goldreich’s original one-wayness assumption \citep{goldreich2000candidate}.
Assumption~\ref{ass:local-prg} was used by \citet{daniely2021local,daniely2023computational} for proving hardness of learning of various classes.

\begin{theorem}[Hardness of PAC Learning REs under the Uniform Distribution]\label{thm:re-pac-uniform}
Under the local-PRG assumption (Assumption~\ref{ass:local-prg}), for any constants $\epsilon,\gamma>0$, there is no tractable algorithm that $\gamma$-weakly learns the concept class 
\[\cC_n = \bigl\{ c : \{0,1\}^n \to \{0,1\} \bigm| c \text{ is representable by a regular expression of size}\leq n^\epsilon \bigr\}\]
on the uniform distribution over $\{0,1\}^n$.
\end{theorem}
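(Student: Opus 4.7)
The plan is to adapt the local-PRG reduction framework of \cite{daniely2021local} and tailor the concept to yield a short plain regular expression. Under Assumption~\ref{ass:local-prg}, fix a constant locality $k$, a predicate $P:\{0,1\}^k\to\{0,1\}$, and a polynomial stretch $s$ (to be chosen large in terms of $\epsilon$) such that $\mathcal{F}_{P,n,n^s}$ is a $1/3$-PRG. For each PRG seed length $n$, set the RE input length $N = n^{C}$ for a constant $C=C(\epsilon)$, and parse $\boldz \in \{0,1\}^N$ as $(\mathrm{bin}(E),\boldr)$, where the first $k\log n$ bits encode an ordered tuple $E=(i_1,\ldots,i_k)\in[n]^k$ and $\boldr$ is padding. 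For each seed $\boldu\in\{0,1\}^n$ define the concept $c_\boldu(\boldz) = P(\boldu_{i_1},\ldots,\boldu_{i_k})$, so that evaluating $c_\boldu$ on a uniform $\boldz$ exactly mirrors a single output bit of the Goldreich PRG at a uniform edge. The plan is to show that (i) each $c_\boldu$ has a plain RE of size at most $N^\epsilon$, and (ii) a $\gamma$-weak learner for $\{c_\boldu\}$ on the uniform distribution yields a polynomial-time distinguisher against $\mathcal{F}_{P,n,n^s}$ with advantage exceeding $1/3$.

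The first step is the RE construction. Group seed positions by bit value, $A_b=\{i\in[n]:\boldu_i=b\}$ for $b\in\{0,1\}$, and set $R_{A_b}=\bigvee_{i\in A_b}\mathrm{bin}(i)$, a disjunction over binary encodings of length $\log n$. Define
\[
R_\boldu \;=\; \bigvee_{S\in P^{-1}(1)} R_{A_{S_1}}\,R_{A_{S_2}}\cdots R_{A_{S_k}}\cdot(0\vee 1)^{\star}.
\]
The tail Kleene star absorbs the padding at $O(1)$ cost; each concatenated factor has size $O(n\log n)$; the outer disjunction has only $|P^{-1}(1)|\leq 2^k=O(1)$ branches. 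Hence $|R_\boldu|=O(n\log n)$, and choosing $C$ large enough that $N^\epsilon \geq \Omega(n\log n)$ puts $c_\boldu$ in the concept class.

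For the reduction, given a PRG challenge $(G,\boldy)$ with $G=(E_1,\ldots,E_m)$ and $m=n^s$, form labeled examples $(\boldz_j,y_j)$ by concatenating $\mathrm{bin}(E_j)$ with fresh uniform padding $\boldr_j$. Split them into a training half and a test half of polynomial sizes, run the hypothetical tractable $\gamma$-weak learner on the training half to obtain $h$, and estimate its test accuracy empirically. Choose $s$ so that $m/2$ exceeds both the learner's sample complexity (polynomial in $N=\mathrm{poly}(n)$) and the $O(1/\gamma^2)$ test samples needed for Hoeffding concentration. In the pseudorandom case the labels coincide with $c_\boldu(\boldz_j)$, the inputs $\boldz_j$ are uniform up to TV distance $O(k^2/n)=o(1)$ from the distinct-index constraint, and the weak-learning guarantee yields test accuracy at least $1/2+\gamma-o(1)$ with high probability; in the uniform case the labels are independent of the inputs, so any $h$ has expected test accuracy exactly $1/2$ and concentrates there. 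Thresholding at $1/2+\gamma/2$ distinguishes the two cases with advantage $1-o(1)>1/3$, contradicting Assumption~\ref{ass:local-prg}.

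The main obstacle I anticipate is the first step: producing a plain RE for the Goldreich function whose size fits into the tight $N^\epsilon$ budget for arbitrarily small constant $\epsilon$. Enumerating satisfying edges gives $\Omega(n^k)$ branches, and a generic NFA-to-RE translation can incur an exponential blow-up---either would miss the budget. The saving insight is to split the edge across its $k$ coordinates and, using that $P$ has only $O(1)$ satisfying patterns, for each pattern alternate over the $O(n)$ indices matching each required seed bit, then use a single Kleene-starred tail for the padding. A secondary technicality is that $\mathcal{G}_{n,m,k}$ samples ordered $k$-tuples of \emph{distinct} indices whereas uniform $\boldz$ may encode repeated indices; the $O(k^2/n)$ TV gap is absorbed into the concentration estimate (or removed by rejection sampling).
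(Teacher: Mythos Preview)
Your overall approach mirrors the paper's: encode the $k$ hyperedge indices in the first $k\log n$ bits, build the RE as a disjunction over $P^{-1}(1)$ of concatenated index-set matchers with a Kleene-starred tail, and use a putative weak learner to distinguish pseudorandom from random. Your $R_\boldu$ is exactly the paper's $R_{\boldx}$, and the size bound is correct.

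The gap is the distinct-indices issue, which you flag as a secondary technicality but is in fact the main obstacle. Your simulated inputs $\boldz_j=(\mathrm{bin}(E_j),\boldr_j)$ come from $\cD'$ (uniform conditioned on the first $k$ blocks encoding \emph{distinct} indices), not from the true uniform distribution on $\{0,1\}^N$, and the learner is only assumed to succeed on the latter. The single-sample TV gap is $O(k^2/n)$, but the learner receives $m=\poly(N)=\poly(n^C)$ samples with $C\ge 1/\epsilon$, so the joint TV distance $m\cdot O(k^2/n)$ is not $o(1)$. Concretely, an adversarial learner can check whether any training example has a repeated index among its first $k$ blocks---an event of probability close to $1$ under uniform sampling but probability $0$ under $\cD'$---and output a useless hypothesis in the latter case while still meeting its uniform-distribution guarantee. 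Neither ``absorbing into the concentration estimate'' (which governs the held-out test error, not the training distribution handed to the learner) nor ``rejection sampling'' (which just reproduces $\cD'$) closes this.

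The paper fixes this by changing the target concept to output $1$ on repeated-index inputs and adding a duplicate-catcher disjunct
\[
R_{\mathrm{dup}}=\bigvee_{1\le a<b\le k}\ \bigvee_{i\in[n]}(0\vee1)^{(a-1)\log n}\,\mathrm{bin}(i)\,(0\vee1)^{(b-a-1)\log n}\,\mathrm{bin}(i)\,(0\vee1)^{\star},
\]
of size $O(k^3 n\log n)$, which still fits the budget. The simulator then samples $\boldz$ truly uniformly; if the first $k\log n$ bits already encode distinct indices it overwrites them with the next challenge hyperedge (a uniform distinct tuple replaced by another, so the marginal stays exactly uniform), and otherwise it leaves $\boldz$ unchanged with label $1$. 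Now the learner sees genuinely uniform inputs labeled by $R_{\boldx}\vee R_{\mathrm{dup}}$, and its guarantee applies verbatim.
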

We note that the hardness result holds even for regular expressions using only union and concatenation (i.e., without the Kleene star), provided that we bound the RE size by $O(n)$ (instead of $n^\epsilon$).

The formal proof is rather technical and appears in \Cref{app:pac-uniform}.
The high-level strategy is as follows.  For a hyperedge $E = (i_1,\ldots,i_k)$, we denote $\boldx|_E = (x_{i_1}, \ldots, x_{i_k})$.
Let $s>1$. Given a sequence $(E_1,y_1),\ldots,(E_{n^s},y_{n^s})$,
where $E_1,\ldots,E_{n^s}$ are i.i.d.\ random hyperedges, we aim to design an algorithm that distinguishes whether $\boldy=(y_1,\ldots,y_{n^s})$ is uniformly random or pseudorandom. 
Specifically, in the pseudorandom case we set $\boldy = \big(P(\boldx|_{E_1}), \ldots, P(\boldx|_{E_{n^s}})\big)$,
where $\boldx \in \{0,1\}^n$ is chosen uniformly at random. 
Assuming the existence of a tractable weak learning algorithm for regular expressions for the uniform distribution, we construct a training set that is realizable by regular expressions when $\boldy$ is pseudorandom. The hypothesis returned by the weak learner then acts as a distinguisher between the random and pseudorandom cases, thereby violating the PRG assumption.
\subsection{Comparison to \cite{angluin2013learnability,chen2014learning}}\label{sec:comparison-shuffle}
The learnability of a subclass of REs called \emph{shuffle ideals} was investigated by \citet{angluin2013learnability}. They established tractable PAC learnability under the uniform distribution, and, under cryptographic assumptions, hardness of improper PAC learning for a sufficiently large (non-binary) alphabets.
This hardness does not carry over when membership queries are allowed.
 They further noted that extending this hardness result to smaller alphabets, in particular the binary alphabet, remains an open question. Subsequently, \cite{chen2014learning} proved tractable learnability of this class for specific distributions in the statistical query model.

In contrast, we study plain regular expressions over the binary alphabet. We first establish distribution-free hardness in the PAC model. We then prove hardness under the uniform distribution, in contrast to the positive result for the subclass of shuffle ideals. Finally, in the next section, we show both distribution-free and distribution-specific hardness even in the presence of membership queries, a setting for which no such hardness results were previously known.

We note that the construction of \citet{angluin2013learnability} can be modified to yield distribution-free hardness (under cryptographic assumptions) for learning general regular expressions.  This can be done by converting their strings over a non-binary alphabet into binary strings,
incurring only a logarithmic overhead in size.  The resulting regular expressions remain valid, since we do not restrict ourselves to the subclass of shuffle ideals. This observation was not mentioned by \citet{angluin2013learnability}, who did not ask about standard REs and left the question of binary alphabets explicitly open.

\section{Hardness of MQ Learning}

In this section, we study the setting where the learner is allowed to issue membership queries, in addition to sampling random labeled examples from the distribution.

\subsection{Distribution-free hardness}\label{sec:dist-specific-ere}

The hardness of learning DNFs, discussed in Section~\ref{sec:dist-free-PAC}, extends to this setting as well. 
In particular, \citet{angluin1991wont} showed that if there exists a non-uniform one-way function that cannot be inverted by polynomial-size circuits, then the hardness of distribution-free PAC learning DNFs carries over to the model with membership queries, i.e., DNFs are either efficiently PAC-learnable from random examples alone, or else not even efficiently learnable with queries. 
Hence, using the same arguments from Section~\ref{sec:dist-free-PAC}, we have the following:

\begin{theorem}[Distribution-Free Hardness of PAC+MQ Learning REs] \label{thm:dist-free-queries}
Assume there exists a non-uniform one-way function, and moreover assume either the local-PRG assumption (Assumption~\ref{ass:local-prg}), the random $K$-SAT assumption \citep{daniely2016complexity}, or the sparse learning-parity-with-noise assumption from \citet{applebaum2025structured}.
Then, for any constant $\epsilon>0$, there is no tractable algorithm that PAC+MQ learns the concept class $$  \cC_n = \bigl\{ c : \{0,1\}^n \to \{0,1\} \bigm| c \text{ is representable by a regular expression of size}\leq n^\epsilon \bigr\}.$$ 
\end{theorem}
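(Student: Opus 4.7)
The plan is to combine three known ingredients: (i) the distribution-free PAC hardness of weakly learning DNFs under each of the three listed assumptions (as already invoked in \Cref{thm:hardness-from-DNF}); (ii) the lifting theorem of \citet{angluin1991wont}, which shows that under a non-uniform one-way function assumption, the distribution-free PAC hardness of DNFs is preserved even when the learner is granted membership queries; and (iii) the prediction-preserving DNF-to-RE translation of \Cref{lem:dnf-to-re}.

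More concretely, I would first repeat the reasoning used in \Cref{thm:hardness-from-DNF} to obtain distribution-free PAC hardness of weakly learning DNFs with $m(n)=\log n$ terms (under local-PRG or sparse-LPN) or $m(n)=\log^2 n$ terms (under random $K$-SAT). Second, I would apply \citet{angluin1991wont} to lift this to distribution-free PAC+MQ hardness of weakly learning the same class of DNFs. Third, I would transport this hardness to REs via \Cref{lem:dnf-to-re}: given a putative tractable PAC+MQ learner $A$ for REs of the prescribed size, and a target DNF $\Phi$, one runs $A$ while forwarding each random labeled example unchanged, and answers every membership query of $A$ at a point $\boldx \in \{0,1\}^n$ by issuing the same query to the DNF's MQ oracle. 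Correctness holds because $R_\Phi(\boldx)=\Phi(\boldx)$ for all $\boldx \in \{0,1\}^n$, so $A$ is effectively being run on the target $R_\Phi$; its output is a small-error hypothesis for $\Phi$, contradicting the PAC+MQ hardness of DNFs established in the previous step. Finally, the same zero-padding scaling argument as in \Cref{thm:hardness-from-DNF} converts the RE-size bound from $O(n\log n)$ (or $O(n\log^2 n)$) down to $n^\epsilon$ for any constant $\epsilon>0$, and since every strong PAC+MQ learner is in particular a weak learner, the resulting weak hardness immediately yields the statement of the theorem.

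The ``main obstacle'' here is really only a sanity check: the reduction of \Cref{lem:dnf-to-re} must be faithful under membership queries and not just under random labeled examples. Since both learning problems share the same input space $\{0,1\}^n$ and the concepts $R_\Phi$ and $\Phi$ agree pointwise on it, the simulation of a RE-level membership query by a single DNF-level membership query is immediate and needs no additional machinery. Thus the proof is essentially a clean concatenation of known results, with no new technical content beyond verifying this MQ-faithfulness.
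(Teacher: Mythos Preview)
Your proposal is correct and follows essentially the same approach as the paper: the paper's argument is simply to invoke \citet{angluin1991wont} to lift the distribution-free PAC hardness of DNFs to PAC+MQ hardness, and then reuse ``the same arguments from Section~\ref{sec:dist-free-PAC}'' (i.e., \Cref{lem:dnf-to-re} plus the padding/scaling step) verbatim. Your explicit verification that the DNF-to-RE reduction is MQ-faithful (same domain, pointwise-equal concepts, so queries pass through unchanged) is a detail the paper leaves implicit, but otherwise the two arguments coincide.
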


\subsection{Distribution-specific hardness of learning extended REs}\label{sec:dist-specific-ere}

So far we have established distribution-free hardness 
of learning regular expressions, even when membership queries are allowed. Moreover, we established hardness of learning regular expressions without queries under the uniform distribution.
In this section, we show 
hardness of learning with queries under the uniform distribution,
provided that regular expressions are extended with complement or intersection. Our approach is to reduce PAC+MQ learning Boolean formulae to the problem of learning such extended REs, from which the hardness result follows by \cite{kharitonov1993cryptographic}.

\begin{lemma}[Polynomial-Size Boolean Formula $\to$ $\REneg$\,/\,$\REcap$ Translation]\label{lem:bf-to-ere}
Let $\Sigma=\{0,1\}$ and let $\varphi(x_1,\ldots,x_n)$ be a Boolean formula of size $|\varphi|$.
There exists a regular expression $R_\varphi$ over $\Sigma$ using only union and concatenation together with either complement (for $\REneg$) or intersection (for $\REcap$), such that
$
L(R_\varphi)\cap \{0,1\}^n \;=\; \bigl\{(a_1,\ldots,a_n)\in \{0,1\}^n : \varphi(a_1,\ldots,a_n)=1\bigr\}.
$
Moreover, $|R_\varphi|=O(|\varphi|n)$, and $|R_\varphi|=O(|\varphi|\log n)$ if the counting operator is allowed.
\end{lemma}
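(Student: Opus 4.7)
The plan is to construct $R_\varphi$ by structural induction on $\varphi$, maintaining the invariant that for every subformula $\psi$,
\[
L(R_\psi) \cap \{0,1\}^n \;=\; S_\psi,
\]
where $S_\psi \subseteq \{0,1\}^n$ denotes the set of length-$n$ binary strings encoding satisfying assignments of $\psi$. The base case is a ``bit at position $i$'' gadget: setting $\alpha := (0 \vee 1)$, I take $R_{x_i} := \alpha^{i-1}\cdot 1\cdot \alpha^{n-i}$ and $R_{\neg x_i} := \alpha^{i-1}\cdot 0\cdot \alpha^{n-i}$ (with $\alpha^{0}=\varepsilon$ when $i=1$ or $i=n$). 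Each literal gadget has size $O(n)$ without counting and $O(\log n)$ with counting, by encoding the exponents $i-1$ and $n-i$ in binary.

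For the $\REcap$ case, I would first preprocess $\varphi$ into negation normal form, pushing all negations down to the leaves via De Morgan's laws; this preserves formula size up to a constant factor and leaves only $\wedge$ and $\vee$ at internal nodes. I then translate internal nodes by $R_{\psi_1\wedge\psi_2} := R_{\psi_1}\wedge R_{\psi_2}$ and $R_{\psi_1\vee\psi_2} := R_{\psi_1}\vee R_{\psi_2}$. Intersecting both sides of the inductive hypothesis with $\{0,1\}^n$ immediately yields $S_{\psi_1}\cap S_{\psi_2}$ and $S_{\psi_1}\cup S_{\psi_2}$, respectively, preserving the invariant.

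For the $\REneg$ case, no NNF preprocessing is needed. Union is handled as above, complement is handled directly by $R_{\neg\psi} := \neg R_\psi$, and intersection is simulated through De Morgan: $R_{\psi_1\wedge\psi_2} := \neg(\neg R_{\psi_1}\vee \neg R_{\psi_2})$. The only nontrivial verification is for complement: although $L(\neg R_\psi) = \Sigma^\star\setminus L(R_\psi)$ contains strings of lengths $\neq n$, one has
\[
L(\neg R_\psi) \cap \{0,1\}^n \;=\; \{0,1\}^n \setminus \bigl(L(R_\psi)\cap \{0,1\}^n\bigr) \;=\; \{0,1\}^n \setminus S_\psi \;=\; S_{\neg\psi},
\]
which is exactly what the invariant demands. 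The analogous calculation for $\neg(\neg R_{\psi_1}\vee \neg R_{\psi_2})$ yields $L(R_{\psi_1})\cap L(R_{\psi_2})$, and intersecting with $\{0,1\}^n$ gives $S_{\psi_1}\cap S_{\psi_2}$.

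For the size bound, each of the at most $|\varphi|$ literal leaves contributes $O(n)$ symbols (respectively $O(\log n)$ with counting), and each of the $O(|\varphi|)$ internal operators contributes $O(1)$, so $|R_\varphi| = O(|\varphi|\cdot n)$ in general and $O(|\varphi|\log n)$ when the counting operator is permitted. The main subtlety throughout is keeping the invariant properly relativized to $\{0,1\}^n$ rather than phrased as a full language equality: once complement is applied the language $L(R_\varphi)$ no longer lies inside $\{0,1\}^n$, but since the lemma only asks for agreement on length-$n$ strings, the induction carries through cleanly.
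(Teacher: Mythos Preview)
Your proposal is correct and follows essentially the same approach as the paper: the same literal gadgets $(0\vee 1)^{i-1}\,b\,(0\vee 1)^{n-i}$, the same inductive translation of connectives, and the same size accounting. The only cosmetic difference is that the paper first converts to NNF and then, for the $\REneg$ case, simulates $\wedge$ via De Morgan on the already-NNF formula, whereas you skip NNF for $\REneg$ and translate $\neg$ directly; both routes yield the same $O(|\varphi|\,n)$ bound, and your explicit handling of the ``relativize to $\{0,1\}^n$'' invariant under complement is a point the paper leaves implicit.
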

\begin{proof} We start with the case of $\REcap$:
First convert $\varphi$ to negation normal form (NNF), pushing all negations to literals. This increases the size by at most a constant factor.
Define $R(\cdot)$ by structural induction on the NNF structure and set $R_\varphi := R(\varphi)$.
For literals ($i\in[n]$),
\[
R(x_i) := (0\vee 1)^{i-1}1(0\vee 1)^{n-i},
\qquad
R(\neg x_i) := (0\vee 1)^{i-1}0(0\vee 1)^{n-i}.
\]
For connectives,
\[
R(\alpha\vee\beta) := R(\alpha) \vee R(\beta),
\qquad
R(\alpha\wedge\beta) := R(\alpha) \wedge R(\beta).
\]
By induction on the NNF structure we show
\[
L\big(R(\varphi)\big)\cap\{0,1\}^n
=\{\mathbf{a}\in\{0,1\}^n:\varphi(\mathbf{a})=1\}.
\]
The base cases hold by construction of $R(x_i)$ and $R(\neg x_i)$.
For the inductive steps, use
$L(R(\alpha\vee\beta))=L(R(\alpha))\cup L(R(\beta))$
and
$L(R(\alpha\wedge\beta))=L(R(\alpha))\cap L(R(\beta))$.
The claim follows from the induction hypothesis.

\textbf{The size of the regular expression.}
Each literal contributes $O(n)$ symbols (or $O(\log n)$ with the counting operator), and each connective adds $O(1)$.
The NNF conversion is linear-size, so $|R_\varphi|=O(|\varphi|n)$, or $O(|\varphi|\log n)$ with counting.

\textbf{Conclusion for (\texorpdfstring{$\REneg$}{REneg}).}
Since intersection is definable from complement and union by De Morgan, with constant overhead, $R(\alpha\wedge\beta) := \neg\bigl(\neg R(\alpha) \vee\neg R(\beta)\bigr)$,
the above construction immediately yields an $\REneg$ expression of the same asymptotic size. Thus, the correctness and size bounds carry over to $\REneg$ as well.
\end{proof}

The above lemma shows that polynomial-size Boolean formulas can be represented by polynomial-size REs extended with intersection or with negation. Hence, hardness of learning Boolean formulas implies hardness of learning $\REcap$ and $\REneg$.
Thus, we establish the hardness of learning extended REs via known results on Boolean formulas \citep{kharitonov1993cryptographic}. 

\begin{theorem}[Hardness of PAC+MQ Learning $\REcap$ or $\REneg$ Under the Uniform Distribution]\label{thm:ere-pac+mq-hardness}
Assuming one of the cryptographic assumptions: RSA, factoring Blum integers, or quadratic residues \citep{kharitonov1993cryptographic},
for any constant $\epsilon>0$, there is no tractable algorithm that PAC+MQ learns the concept class $$  \cC_n = \bigl\{ c : \{0,1\}^n \to \{0,1\} \bigm| c \text{ is representable by a RE with intersection or complement of size}\leq n^\epsilon \bigr\},$$
on the uniform distribution over $\{0,1\}^n$.
\end{theorem}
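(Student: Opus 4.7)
The plan is to reduce PAC+MQ learning of polynomial-size Boolean formulas on $\{0,1\}^n$ under the uniform distribution, which \citet{kharitonov1993cryptographic} shows is intractable under any of the three listed cryptographic assumptions, to PAC+MQ learning of $\cC_{\tilde n}$ under the uniform distribution on a suitably enlarged cube $\{0,1\}^{\tilde n}$. The translation between Boolean formulas and extended REs is exactly what Lemma~\ref{lem:bf-to-ere} provides, so the remaining work is a padding/scaling argument and a routine simulation of the PAC+MQ oracles.

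Fix any constant $\epsilon>0$ and let $\varphi$ be a Boolean formula over $\{0,1\}^n$ in Kharitonov's hard class, with $|\varphi|\leq n^d$ for some constant $d$. Apply Lemma~\ref{lem:bf-to-ere} to obtain an RE $R_\varphi$ (with intersection, or equivalently with complement via De Morgan) of size $O(n^{d+1})$ such that $L(R_\varphi)\cap\{0,1\}^n$ equals the satisfying set of $\varphi$. Inspecting the inductive construction in the lemma shows that in fact $L(R_\varphi)\subseteq\{0,1\}^n$: every literal gadget $(0\vee 1)^{i-1}\sigma(0\vee 1)^{n-i}$ has words of length exactly $n$, and unions and intersections preserve this length invariant. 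Define the padded expression
\[
\tilde R_\varphi \;:=\; R_\varphi\cdot(0\vee 1)^{\star},
\]
whose total size is still $O(n^{d+1})$ since the Kleene-star tail contributes only $O(1)$ symbols. On length-$\tilde n$ inputs it accepts exactly those strings whose first $n$ bits satisfy $\varphi$. Choose $\tilde n:=\lceil n^{(d+2)/\epsilon}\rceil$, so that $\tilde n^{\epsilon}\geq n^{d+2}\geq |\tilde R_\varphi|$ for all sufficiently large $n$; then the concept $\tilde c_\varphi:\{0,1\}^{\tilde n}\to\{0,1\}$ defined by $\tilde c_\varphi(x)=1$ iff $x\in L(\tilde R_\varphi)$ lies in $\cC_{\tilde n}$.

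Assume for contradiction a tractable PAC+MQ learner $A$ for $\cC_{\tilde n}$ under the uniform distribution on $\{0,1\}^{\tilde n}$. We simulate its oracles using only the oracles for $\varphi$: for a random example, draw $(x,\varphi(x))$ from $\varphi$'s example oracle, pick $\tilde n-n$ fresh uniform bits $z$, and feed $A$ the pair $(xz,\varphi(x))$, whose marginal law on $\{0,1\}^{\tilde n}$ is uniform and whose label matches $\tilde c_\varphi(xz)=\varphi(x)$; for a membership query on $y\in\{0,1\}^{\tilde n}$, answer $\varphi(y_1,\ldots,y_n)$ with one MQ to $\varphi$. The hypothesis $h$ returned by $A$ then satisfies $\Pr_{x,z}[h(xz)\neq\varphi(x)]\leq\epsilon_0$ under independent uniform $x$ and $z$, so the randomized hypothesis $h'(x):=h(xz)$ with fresh uniform $z$ predicts $\varphi$ with expected error $\leq\epsilon_0$ in time $\poly(\tilde n)=\poly(n)$, contradicting Kharitonov's hardness of PAC+MQ learning polynomial-size Boolean formulas under the uniform distribution.

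The only mildly subtle step is the padding. A deterministic pad $(0\vee 1)^{\tilde n-n}$ would inflate the RE to size $\Theta(\tilde n)$, which exceeds $\tilde n^\epsilon$ whenever $\epsilon<1$, since the theorem does not assume a counting operator. Padding with a Kleene star $(0\vee 1)^{\star}$ instead costs only $O(1)$ symbols and exploits that $L(R_\varphi)$ is supported entirely on length-$n$ strings, so its behavior on $\{0,1\}^{\tilde n}$ is exactly what the reduction needs; everything else is routine PAC-reduction bookkeeping.
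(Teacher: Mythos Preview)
Your proof is correct and follows the same route as the paper: invoke Kharitonov's hardness for Boolean formulas under the uniform distribution with membership queries, translate via Lemma~\ref{lem:bf-to-ere}, and then scale. The paper's own proof is terser, appealing to ``a standard scaling argument, similar to the proof of \Cref{thm:hardness-from-DNF}''; you have made this step explicit and in fact handled a point the paper glosses over---the zero-padding used in \Cref{thm:hardness-from-DNF} does not preserve the uniform distribution, and a literal $(0\vee 1)^{\tilde n-n}$ suffix would exceed $\tilde n^\epsilon$ for $\epsilon<1$, whereas your $(0\vee 1)^\star$ suffix costs $O(1)$ and works precisely because the inductive construction in Lemma~\ref{lem:bf-to-ere} (in both the $\REcap$ variant and the De~Morgan-based $\REneg$ variant) yields $L(R_\varphi)\subseteq\{0,1\}^n$.
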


\begin{proof}
Under one of the assumptions: RSA, factoring Blum integers, or quadratic residues,
there exists some constant $s>1$ such that Boolean formulas of size $n^s$ cannot be tractably weakly 
learned with membership queries on the uniform distribution over $\{0,1\}^n$ \citep{kharitonov1993cryptographic}.  
By Lemma~\ref{lem:bf-to-ere}, this implies hardness of weakly learning $\REcap$ or $\REneg$ 
of size $n^{s+1}$.  
By a standard scaling argument, similar to the proof of \Cref{thm:hardness-from-DNF}, we conclude that for any $\epsilon>0$, it is hard to weakly learn such 
REs of size $n^\epsilon$.  
\end{proof}


\section{Discussion and Open Questions}
Regular expressions are a fundamental object in computer science, yet the computational complexity of learning them has surprisingly not been previously rigorously established.
In this work, we close this imporant gap in the literature by proving hardness results both in the PAC model and in the membership query setting, under distribution-free learning as well as under the uniform distribution.  Beyond the fact that we now have a reference to rely on regarding this hardness, closing this gap is also important since it reveals subtleties about the tractability of learning regular expressions, in particular under membership queries.

As we have discussed, learning REs is not the same as learning DFAs or NFAs, even though all three characterize the class of regular languages.
The key distinction lies in the measure of complexity, namely, the description length, which as shown in \Cref{sec:description-re-dfa}, can differ substantially across these representations.
This difference has concrete consequences for learnability: DFAs are efficiently learnable with membership queries, whereas we prove that REs remain hard in the same model.

A key takeaway is that what truly matters is the complexity measure (or description length, or equivalently `prior') induced by the model, rather than the concept class itself.
Although REs, DFAs, and NFAs all define the same family of regular languages and are thus equivalent in a uniform sense, this equivalence is largely irrelevant from a learning perspective.
For any finite $n$, every predictor defines a finite, and therefore regular language.
The real question is the quantitative {\em complexity measure} (e.g., DFA size or RE length), and these differ dramatically between different ``equivalent'' models.  Here, we focused only on polynomial hardness (a fairly crude notion), but in a more refined analysis, e.g., of sample complexity or exact runtime, finer-grained quantitative differences become even more important.

As a direction for future work, we point out that to the best of our knowledge, it is still unknown whether there exists a tractable algorithm for learning plain REs or NFAs with membership queries under the uniform distribution.  Also, while REs can be efficiently converted into NFAs, NFAs may be exponentially more succinct than REs. However, it remains unclear whether this asymmetry implies a genuine separation in their learnability.

\section*{Acknowledgments and Disclosure of Funding}
We thank Aryeh Kontorovich and Dana Angluin for helpful discussions.

This work was conducted as part of the NSF-supported Institute for Data, Econometrics, Algorithms and Learning (IDEAL), under grants NSF ECCS-2217023 and NSF EECS-2216899.
Gal Vardi is supported by the Israel Science Foundation (grant No. 2574/25), by a research grant from Mortimer Zuckerman (the Zuckerman STEM Leadership Program), and by research grants from the Center for New Scientists at the Weizmann Institute of Science, and the Shimon and Golde Picker -- Weizmann Annual Grant.

\bibliography{refs}

\newpage
\appendix

\section{Hardness of Regular Expressions in PAC Learning under the Uniform Distribution}\label{app:pac-uniform}

\begin{proof}[of \Cref{thm:re-pac-uniform}]
For $\boldx\in \{0,1\}^n$ and coordinates $(i_1,\ldots,i_\ell)$ we write $\boldx|_{(i_1,\ldots,i_\ell)} = (x_{i_1}, \ldots, x_{i_\ell})$, that is, the restriction of $\boldx$ to these coordinates. In particular, for a hyperedge $E = (i_1,\ldots,i_k)$, we write $\boldx|_E = (x_{i_1}, \ldots, x_{i_k})$.

Our high-level strategy is as follows.  
Let $s>1$. Given a sequence $(E_1,y_1),\ldots,(E_{n^s},y_{n^s})$,
where $E_1,\ldots,E_{n^s}$ are i.i.d.\ random hyperedges, we aim to design an algorithm that distinguishes whether $\boldy=(y_1,\ldots,y_{n^s})$ is uniformly random or pseudorandom. 
Specifically, in the pseudorandom case we set $\boldy = \big(P(\boldx|_{E_1}), \ldots, P(\boldx|_{E_{n^s}})\big)$,
where $\boldx \sim \{0,1\}^n$ is chosen uniformly at random. 
Assuming the existence of an efficient weak learning algorithm for regular expressions for the uniform distribution, we construct a training set that is realizable by regular expressions when $\boldy$ is pseudorandom. The hypothesis returned by the weak learner then acts as a distinguisher between the random and pseudorandom cases, thereby violating the PRG assumption.

Let $\cD$ be the uniform distribution on $\{0,1\}^{n^{\alpha}}$, where $\alpha \geq 1$ (to be determined at the end of the proof).
Suppose for contradiction that there exists an efficient algorithm $L$ that learns regular expressions of size $n\log^{2}(n)$ under $\cD$. 
Let $p(n)$ be a polynomial such that $L$ uses at most $p(n)$ samples and, with probability at least $\tfrac34$, outputs a hypothesis $h$ with error at most $\tfrac12-\gamma$.

Choose a constant $s>1$ such that $n^s \ge p(n)+n$ for all sufficiently large $n$.
By Assumption~\ref{ass:local-prg}, for every $s$, there exists a constant $k$ and a predicate $P:\{0,1\}^k \to \{0,1\}$ such that $\mathcal{F}_{P,n,n^s}$ is a $\tfrac{1}{3}$-PRG.
We design an algorithm $A$ that distinguishes random from pseudorandom with advantage greater than $\tfrac13$, contradicting the assumption. The algorithm $A$ uses $p(n)$ samples to simulate the learning algorithm $L$, along with an additional held-out set of $n$ samples that are not observed by $L$.

\paragraph{Hyperedge encodings.} 
We encode a hyperedge $E = (i_1,\ldots,i_k)$ by a vector $\boldz^E \in \{0,1\}^{kn}$, defined as the concatenation of $k$ vectors in $\{0,1\}^n$, where the $j$-th vector has a zero in the $i_j$-th coordinate and ones in all other coordinates (similar to a one-hot encoding, except that the marked entry is zero and all others are one). This construction uniquely represents the hyperedge $E$.
Let $\tilde{\boldz}^E \in \{0,1\}^{k\log(n)}$ denote the \emph{compressed encoding} of $E$, defined as the concatenation of the binary representations (each of length $\log (n)$) of the $k$ vertices of $E$. 
We say that $\tilde \boldz \in \{0,1\}^{n^{\alpha}}$ is an \emph{extended compressed encoding} of $E$ if $\tilde \boldz|_{(1,\ldots,{k\log(n)})}=\tilde \boldz^E|_{(1,\ldots,{k\log(n)})}$, that is,
the first $k\log(n)$ coordinates of $\tilde \boldz$ equal $\tilde \boldz^E$.

If $\tilde \boldz$ is uniform in $\{0,1\}^{n^{\alpha}}$, then the probability that it is a valid extended compressed encoding, namely, that each two of the first $k$ blocks of size $\log(n)$ encode different indices, is simply the probability that $k$ independently chosen indices are all distinct,
\begin{align}\label{eq:hyperedge}
\begin{split}
\Pr[\tilde \boldz \text{ encodes a hyperedge}]
&= \frac{n \cdot (n-1) \cdots (n-k+1)}{n^k} \\
&\ge \Bigl(1-\tfrac{k}{n}\Bigr)^k \\
&\ge 1-\tfrac{\gamma}{2},
\end{split}
\end{align} 
for sufficiently large $n$.
Note that we need the compressed encoding exactly for \eqref{eq:hyperedge}: 
if we used $k$ blocks of size $n$, then each block would have to be a one-hot 
vector, which occurs with negligible probability.
\paragraph{Simulating examples for $L$ (efficient weak learner for \re under the uniform distribution).}
For $\boldx \in \{0,1\}^n$ , let $P_{\boldx}: \{0,1\}^{kn}\rightarrow \{0,1\}$ and $\tilde P_{\boldx}: \{0,1\}^{k\log(n)} \rightarrow \{0,1\}$ be such that for every hyperedge $E$, it holds that
$P_{\boldx}(\boldz^E) = \tilde P_{\boldx}(\tilde \boldz^E) = P(\boldx|_{E})$.

Given $(E_1,y_1),\ldots,(E_{n^s},y_{n^s})$ with $E_i$ random hyperedges, 
algorithm $A$ must distinguish whether $\boldy = (y_1,\ldots,y_{n^s})$
is uniformly random or pseudorandom. In the pseudorandom case we have $\boldy = \bigl(P(\boldx|_{E_1}),\ldots,P(\boldx|_{E_{n^s}})\bigr)  
= \bigl(\tilde P_{\boldx}(\tilde \boldz^{E_1}),\ldots,\tilde P_{\boldx}(\tilde \boldz^{E_{n^s}})\bigr)$
for some uniformly random $\boldx \in \{0,1\}^n$. Denote by $\boldsymbol{E} = (\tilde \boldz^{E_1},y_1),\ldots,(\tilde \boldz^{E_{n^s}},y_{n^s})$
the corresponding encoded hyperedges sample.

Algorithm $A$ runs $L$ on oracle access to random examples drawn from $\cD$, the uniform distribution on $\{0,1\}^{n^{\alpha}}$. For each $i
\in [n^s]$, the oracle samples $\tilde \boldz_i \sim \cD$.
\begin{itemize}
    \item If $\tilde \boldz_i$ is not an extended compressed encoding (this happens with probability at most $\gamma/2$ by \cref{eq:hyperedge}), return $(\boldz_i',y_i')$ with $\boldz_i'=\tilde \boldz_i$, $y_i'=1$. 
    \item Otherwise, replace the first $k\log(n)$ coordinates of $\tilde{\boldz}_i$ with $\tilde{\boldz}^{E_i}$, and let $\boldz_i'$ denote the resulting string. Output $(\boldz_i',y_i')$, where $y_i' = y_i$. This modification preserves the uniform distribution $\boldz_i' \sim \cD$, since a random hyperedge is replaced by another random hyperedge.
\end{itemize}
Denote $S_t = \{(\boldz'_i,y'_i) : i \in [p(n)]\}$ and 
$S_v = \{(\boldz'_i,y'_i) : i \in \{p(n)+1,\ldots,p(n)+n\}\}$.
Recall that $n^s \geq p(n)+n$, and that algorithm $L$ observes only $p(n)$ samples. 
Thus, $S_t$ serves as the training set for $L$, while $S_v$ acts as a validation set.
\paragraph{Distinguisher algorithm $A$.}
Let $h$ be the hypothesis produced by $L$ given $S_t$. Denote the error of $h$ on $S_v$ by $\widehat{\err}_{S_v}(h)=\frac{1}{n}\sum_{(\boldz'_i,y'_i)\in S_v} \mathbb{I} \{h(\boldz'_i)\neq y'_i\}$.
\begin{itemize}
    \item If $\widehat{\err}_{S_v}(h) \leq \frac{1}{2}-\frac{\gamma}{2}$ then $A$ returns $1$ (pseudorandom).
    \item Otherwise $A$ returns $0$ (random).
\end{itemize}
We show below that
if $\boldsymbol{E}$ is pseudorandom, then $A$ returns $1$ with probability greater than $2/3$, and if $\boldsymbol{E}$ is random, then $A$ returns $0$ with probability greater than $2/3$.
\paragraph{Regular expression construction.} 
We construct a regular expression $R$ such that, if $\boldy$ is pseudorandom, then all examples drawn from the distribution $\cD$, and in particular the entire simulated sample $\{(\boldz'_i,y'_i) : i \in [n^s]\}$, are realizable by $R$. That is, $\forall i \in [n^s], \quad \boldz'_i \in L(R) \iff y'_i = 1$.

Fix the seed $\boldx\in\{0,1\}^n$ for the pseudorandom generator.
Without loss of generality, assume $n$ is a power of two,
thus, the block length
$\log(n)$ is an integer and every $\log(n)$-bit block encodes some index in $[n]$. Let $\mathrm{bin}(i)$ be the binary representation of $i\in[n]$.
For $b\in\{0,1\}$ define the 
\[
I_b := \bigvee_{i\in[n]:\ x_i=b}\ \mathrm{bin}(i),
\]
that is, $I_b$ matches exactly those $\log(n)$-bit blocks corresponding to indices $i$ with $x_i=b$.
For each $\boldu=(u_1,\ldots,u_k)\in\{0,1\}^k$ with $P(\boldu)=1$ set
\[
R_{\boldu} := I_{u_1}I_{u_2}\cdots I_{u_k}(0 \vee 1)^{\star}.
\]
Let 
\[
R_{\boldx} := \bigvee_{\boldu:\ P(\boldu)=1} R_{\boldu}.
\]
This is a regex that accepts exactly those encodings of hyperedges $(i_1,\ldots,i_k)$ where $P(x_{i_1},\ldots,x_{i_k})=1$.

Since examples drawn from $\cD$ are not necessarily extended compressed encodings but still have label $1$, we need an additional regular expression to capture them. This occurs when an index appears more than once.
We define this duplicate-catcher regular expression as
\[
R_{\mathrm{dup}} := \bigvee_{1\le a<b\le k} \bigvee_{i\in[n]} 
(0 \vee 1)^{(a-1)\log(n)} \mathrm{bin}(i)(0 \vee 1)^{(b-a-1)\log(n)}\mathrm{bin}(i)(0 \vee 1)^{\star}.
\]
Note that we use $(0 \vee 1)^{\star}$ instead of a fixed-length suffix in both $R_{\mathrm{dup}}$ and $R_{\boldu}$, since our distribution $\cD$ is over strings of length $n^\alpha$, and the behavior on other lengths is irrelevant.
Finally, define the target regex
\[
R := R_{\boldx} \vee R_{\mathrm{dup}}.
\]

If $\boldz'_i$ encodes a valid hyperedge $E=(i_1,\ldots,i_k)$ then each block
$I_{u_j}$ enforces $x_{i_j}=u_j$. Hence $R_{\boldx}(\boldz'_i)=P(\boldx|_{E})$, and $R(\boldz'_i)=P(\boldx|_{E})$.
If the encoding is invalid (some index repeats), then $R_{\mathrm{dup}}$ accepts and $R(\boldz'_i)=1$,
matching the simulated labeling rule.
Note that the inputs to $R$ are in $\{0,1\}^{n^\alpha}$, but it uses only the first $k\log(n)$ coordinates of the input. The construction clearly runs in polynomial time. 
\paragraph{The size of $R$.} Each  $I_b$ is a union of at most $O(n)$ strings of length $\log(n)$, so $|I_b|=O(n\log(n))$. Thus, for each pattern $\boldu$ we obtain $|R_{\boldu}|=O(kn\log(n))$.
    Taking the union over at most $2^k$ such patterns gives $|R_{\boldx}|=O(2^k k n\log(n)).$
    For the duplicate--catcher, each of the $\binom{k}{2}n=O(k^2 n)$ branches. 
    Each branch has size $O(k\log (n))$: two strings of length $\log (n)$ and two fixed gaps whose total
    length is at most $(k-2)\log (n)$. Thus, $|R_{\mathrm{dup}}|
    \le O(k^{3} n \log (n))$.
    Therefore, the total size is \[|R|=|R_{\boldx}|+|R_{\mathrm{dup}}|=O(2^k k n\log(n))+O(k^{3} n \log (n)).\]
Since $k$ is independent of $n$, for sufficiently large $n$ we get $|R| \leq O(n\log^2(n))$.

\paragraph{Analysis (pseudorandom case).}
If $\boldsymbol{E}$ is pseudorandom, then for every oracle answer we have $y_i' = R(\boldz_i')$.  
Thus with probability at least $\tfrac34$, algorithm $L$ outputs $h$ with $\mathbb{E}_{\tilde \boldz \sim \cD} [\mathbb{I}\{h(\tilde \boldz)\neq R(\tilde \boldz)\}] \leq \tfrac12 - \gamma.$
Thus, $\mathbb{E}_{S_v}[\widehat{\err}_{S_v}(h)] \le \tfrac12 - \gamma$.  
By Hoeffding’s inequality, $\Pr_{S_v}\Big[\widehat{\err}_{S_v}(h) - \mathbb{E}_{S_v}[\widehat{\err}_{S_v}(h)]\ge \tfrac\gamma4\Big] \le \tfrac1{20}.$
Therefore, with probability at least $1 - \left(\tfrac14 + \tfrac1{20}\right) = \tfrac7{10} > \tfrac23,$
we get $\widehat{\err}_{S_v}(h) \leq \tfrac12 - \tfrac\gamma2$, so $A$ outputs $1$.
\paragraph{Analysis (random case).}
If $\boldsymbol{E}$ is random, then whenever $\boldz_i'$ encodes a hyperedge, the label $y_i'$ is independent of $\boldz_i'$ and uniform.  
Thus for every $h$ and $(\boldz'_i,y'_i)\in S_v$,
\begin{align*}
\Pr[h(\boldz_i') \ne y_i']
&\ge \Pr[h(\boldz_i') \ne y_i' \mid \boldz_i' \text{ encodes a hyperedge}] \cdot \Pr[\boldz_i' \text{ encodes a hyperedge}] \\
&\ge \tfrac12 \cdot (1-\tfrac\gamma2) \\
&= \tfrac12 - \tfrac\gamma4,
\end{align*}
and  $\mathbb{E}_{S_v}[\widehat{\err}_{S_v}(h)] \ge \tfrac12 - \tfrac\gamma4$.  
Applying Hoeffding again, with probability at least $\tfrac{19}{20}$ we have $\widehat{\err}_{S_v}(h) \ge \tfrac12 - \tfrac\gamma2$,
so $A$ outputs $0$ with probability at least $\tfrac23$.
\paragraph{Conclusion.}
Thus $A$ distinguishes pseudorandom from random with advantage $>1/3$, contradicting Assumption~\ref{ass:local-prg}.
Hence,  for sufficiently large $n\in \mathbb{N}$, efficient weak learning of regular expressions of size $n\log^2(n)$ is impossible under the uniform distribution over $\{0,1\}^{n^\alpha}$. We now follow a standard scaling argument (see e.g. \cite{daniely2014average} to obtain a tighter result.

Fix any $\epsilon >0$.
Choose $\alpha = 1 + \tfrac{2}{\epsilon}$, and let $\tilde{n} = n^{\alpha} = n^{1 + \tfrac{2}{\epsilon}}$. 
Thus, no polynomial-time learner weakly learns regular expressions of most $\tilde{n}^{\epsilon} = n^{2 + \epsilon}$ (which already subsumes $n\log^2 (n)$), under the uniform distribution over $\{0,1\}^{\tilde{n}}$.

\paragraph{Extensions to RE without Kleene star, and RE with the counting operator:}
\begin{itemize}
    \item \textbf{RE with union and concatenation (without Kleene star).}
    We can construct a similar star-free RE as follows. $I_b$ remains the same. $R_{\boldu}$ is modified to 
\[
R_{\boldu} := I_{u_1}I_{u_2}\cdots I_{u_k},
\] and 
\[
R_{\boldx} := \left(\bigvee_{\boldu:\ P(\boldu)=1} R_{\boldu}\right)(0 \vee 1)^{(n^\alpha-k\log(n))}.
\]
We modify $R_{\mathrm{dup}}$ as follows. 
For $1\le a<b\le k$ and $i\in[n]$ define 
\[
R_{\mathrm{dup}(a,b,i)} := 
(0 \vee 1)^{(a-1)\log(n)} \mathrm{bin}(i)(0 \vee 1)^{(b-a-1)\log(n)}\mathrm{bin}(i)
(0 \vee 1)^{(k-b)\log(n)},
\]
which means that the $a$-th and $b$-th blocks of size $\log(n)$ are the same index $i$.
Then
\[
R_{\mathrm{dup}} := 
\left(\bigvee_{1\le a<b\le k} \bigvee_{i\in[n]} 
R_{\mathrm{dup}(a,b,i)}
\right)
(0 \vee 1)^{n^\alpha-k\log(n)}.
\]
The final star-free RE is $R := R_{\boldx} \vee R_{\mathrm{dup}}.$

However, we get a larger RE. 
Each block matcher $I_b$ is a union of at most $O(n)$ strings of length $\log(n)$, so $|I_b|=O(n\log(n))$. Thus, for each pattern $\boldu$ we obtain $|R_{\boldu}|=O(kn\log(n))$.
    Taking the union over at most $2^k$ such patterns gives $|R_{\boldx}|=O(2^k k n\log(n))+ O(n^\alpha).$ Note that the suffix of size $O(n^\alpha)$ is added to $R_{\boldx}$ only once.
    For the duplicate--catcher, each of the $\binom{k}{2}n=O(k^2 n)$ branches has size $k\log(n)$, and in addition, a suffix of size $O(n^{\alpha})$. Hence, $ |R_{\mathrm{dup}}|=O(k^3 n\cdot\log(n))+O(n^{\alpha}).$
    Therefore, the total size is
    \[
    |R|=|R_{\boldx}|+|R_{\mathrm{dup}}| = O(2^k k n\log(n))+O(n^\alpha)+O(k^3 n\cdot\log(n))+O(n^{\alpha}).
    \]
    Choosing $\alpha=2$, we get $|R|=O(n^2)$. 
    
     Then, efficient weak learning of star-free regular expressions of size $O(n^{2})$ is impossible under the uniform distribution over $\{0,1\}^{n^2}$. By taking $\tilde n=n^2$, we conclude that there is no polynomial-time learner that weakly learns, under the uniform distribution over $\{0,1\}^{\tilde{n}}$, star-free regular expressions of size at most $O(\tilde n)$. Note that without the Kleene star or the counting operator, we cannot expect to get a better result than a regular expression of linear size. 
    \item \textbf{RE with union, concatenation, and counting (without Kleene star).}
    We can reduce the size of the star-free RE by using the counting operator.
    With counting, the suffix $(0\vee 1)^{n^\alpha-k\log(n)}$ is described by $O(\log (n))$ symbols, rather than $O(n^\alpha)$ repetitions and we get $|R_{\boldx}|=O(2^k kn\log(n))$.
    Similarly, for the duplicate--catcher, the suffix is also $O(\log(n))$ and $ |R_{\mathrm{dup}}|=O(k^3 n\log(n)).$
    Therefore the total size is $|R|=O(2^k kn\log(n))+ O(k^3 n\log(n))$, 
    and for sufficiently large $n$ simplifies to $|R|=O(n\log^2(n))$.
    
    The size of the regex is as in the construction of the plain RE (with Kleene star), so we conclude that there is no polynomial-time learner that weakly learns regular expressions of most $\tilde{n}^{\epsilon}$ under the uniform distribution over $\{0,1\}^{\tilde{n}}$.
    \end{itemize}

\end{proof}

\end{document}